\newtheorem{theorem}{Theorem}
\newtheorem{proposition}{Proposition}
\newcommand{\mc}{\mathcal}
\title{Grounding Methods for Neural-Symbolic AI}
\author{
Rodrigo	Castellano Ontiveros$^1$\and
Francesco Giannini$^2$\and
Marco Gori$^1$\and\\
Giuseppe Marra$^3$\And 
Michelangelo Diligenti$^1$
\affiliations
$^1$University of Siena, Italy\\
$^2$Scuola Normale Superiore, Italy\\
$^3$KU Leuven, Belgium\\
\emails
\{rodrigo.castellano,marco.gori,michelangelo.diligenti\}@unisi.it,\\
francesco.giannini@sns.it,
giuseppe.marra@kuleuven.be
}
\begin{document}

\maketitle

\begin{abstract}
A large class of Neural-Symbolic (NeSy) methods employs a machine learner to process the input entities, while relying on a reasoner based on First-Order Logic to represent and process more complex relationships among the entities. A fundamental role for these methods is played by the process of logic grounding, which determines the relevant substitutions for the logic rules using a (sub)set of entities. Some NeSy methods use an exhaustive derivation of all possible substitutions, preserving the full expressive power of the logic knowledge. This leads to a combinatorial explosion in the number of ground formulas to consider and, therefore, strongly limits their scalability. Other methods rely on heuristic-based selective derivations, which are generally more computationally efficient, but lack a justification and provide no guarantees of preserving the information provided to and returned by the reasoner. 
Taking inspiration from multi-hop symbolic reasoning, this paper proposes a parametrized family of grounding methods generalizing classic Backward Chaining. Different selections within this family allow us to obtain commonly employed grounding methods as special cases, and to control the trade-off between expressiveness and scalability of the reasoner.
The experimental results show that the selection of the grounding criterion is often as important as the NeSy method itself.
\end{abstract}

\section{Introduction}
% Grounding has played a pivotal role %logic and philosophy~\cite{genco2023stands} and, later, 
% in logic programming and theorem proving~\cite{loveland2016automated}.
Neural-Symbolic (NeSy) methods~\cite{d2009neural,marra2024statistical} integrate the strengths of neural networks and symbolic reasoning, and heavily rely on the process of logic grounding, which bridges the semantic gap between neural representations and symbolic logic. 
However, classical grounding techniques in NeSy are impractical, and they do not take advantage of the generalization capabilities of NeSy approaches, which can deduce true facts using statistical methods without the need for a classical proof.
%In the same way, the grounding process is fundamental in AI for NeSy methods that exploit logic theories, e.g. in First-Order Logic (FOL), as it is responsible for the mapping of the learned representations onto a logical domain, enabling the extraction of meaningful relationships and inference paths. 
%
In fact, the selection of appropriate grounding represents a significant challenge for NeSy methodologies. Existing approaches exhibit a dichotomy between exhaustive derivations that maintain full expressiveness of logical knowledge but suffer from scalability issues, and heuristic-based selective derivations that sacrifice logical coherence for computational efficiency. Indeed, the employed grounding process is often completely neglected in the NeSy literature, or relegated to a single sentence in the appendix. This hinders the possibility to reproduce, compare, and even fully understand the strengths and weaknesses of the different methods.

This paper aims to fill this gap by exploring diverse grounding methods in NeSy AI. 
We contend that the selection of a suitable grounding criterion is as pivotal as the design of the NeSy method itself. To support this claim, we propose a parameterized generalization of the classic Backward Chaining algorithm, which leverages the well-studied grounding approaches employed in logic inference, but it can be relaxed to take advantage of the statistical nature of NeSy approaches. 
This allows the control of the trade-off between scalability and preservation of the logical context.
Experimentally, we show the efficacy of the proposed grounding methodologies across various NeSy methods on the link prediction task in Knowledge Graphs (KGs). 

\paragraph{Contributions.} The main contributions of this paper are: (i) we formalize a variety of parametrized grounding methods for NeSy models. The parametrization controls the trade-off between scalability, ensuring the applicability of NeSy methods to large datasets, and expressiveness, hence allowing the model to take full advantage of the knowledge; (ii) we show that our results are general and not specific to a single NeSy method, and we provide an extensive experimental evaluation to support this claim, (iii) we redefine a large class of popular NeSy methods within a common message passing schema, allowing us to share a common implementation and reuse the same grounding and scaling techniques.

\section{Preliminaries}
\label{sec:back}
\paragraph{First-Order Logic (FOL).} A function-free FOL language~\cite{barwise1977introduction} is defined by a set of constants (entities) $\mc{C}$, variables $\mathcal{V}$ and predicates (relations) $\mathcal{R}$. An atomic formula, or atom, is $r(a_1,...,a_k)$ with $a_1,...,a_k\in \mc{C} \cup \mathcal{V}$, $r\in \mathcal{R}$ and $k=arity(r)$. A literal is an atom or its negation, and each compound FOL formula $\alpha$ is obtained by recursively combining atomic formulas with connectives ($\neg,\land,\lor,\rightarrow,\leftrightarrow$) and quantifiers ($\forall$ and $\exists$). Examples of FOL formulas are $\forall x Nation(x) \rightarrow \exists y CapitalOf(y,x)$, expressing that  
``All nations have a capital'' or $\forall x\forall y\forall z LocIn(x,y)\wedge LocIn(y,z)\rightarrow LocIn(x,z)$, expressing the transitiveness of being located in a place. 
% We define $A$ the set of all the FOL formulas in a given FOL language. \fg{Do we need this?}
Given a formula $\alpha$, we define $\textit{vars}(\alpha)$ as the set of variables contained in $\alpha$. For simplicity, in the following sections we will focus on FOL theories, i.e. sets of formulas, containing only specific formulas called Horn clauses, which play a fundamental role in computational logic. A Horn clause is a disjunction of literals with a single positive literal, which is equivalent to formulas in the form $b_1 \land \ldots \land b_n \rightarrow h$, where the atoms $b_1, \ldots, b_n$ are called \textit{body} atoms and the single atom $h$ is called \textit{head}.

\paragraph{Grounding FOL Theories.} Let us fix a FOL language. A substitution $\theta$ is an assignment from variables to domain constants, like $\theta = \{x:\textit{Italy},  y:\textit{Europe}\}$. A \textit{ground formula} is a FOL formula where all the variables are constants. For example, given $\alpha=LocIn(x,y)$, we can \textit{ground} it with $\theta$ and obtain $\theta\alpha=LocIn(\textit{Italy},\textit{Europe})$.
%\gm{Do we need HB? Otherwise let's remove it}
%\md{I use it in various places below, so we could drop it but we need to restructure a few paragraphs.}
The \textit{Herbrand Base} $HB = \{r(c_1, ..c_k) | r \in \mathcal{R}, k=arity(r), (c_1, ..c_k) \in \mc{C}^k\}$ is the set of all possible ground atoms formed using the predicate and constant symbols. Given a FOL theory $\Gamma$, its \textit{Herbrand Universe} is defined as $HU_{\Gamma} = \{\theta\alpha: \alpha\in\Gamma,\theta\mbox{ is a substitution}\}$,
% , i.e. $\theta = \{vars(\alpha): c\}$ and $c \in \mc{C}^{|vars(\alpha)|}$. 
i.e. $HU_{\Gamma}$ is the set of all possible ground formulas that can be formed from $\Gamma$.
The process of constructing the entire Herbrand Universe is called \textit{(full) grounding}. In general, we refer to \textit{grounding} as the process of constructing a subset of the Herbrand Universe for a given FOL Theory. 

Some methods like Markov Logic Networks (MLNs) \cite{richardson2006markov}, use a graphical representation of the entire Herbrand Universe (or a subset) called a \textit{Grounded Markov Network} (GMN). In the GMN, each node represents a ground atom, and there is an edge between two nodes only if the corresponding ground atoms appear together in at least one grounding of a formula (\Cref{fig:ex_GMN}).

% \textbf{Grounding in StarAI}
% Grounding has been intensively studied in the past in the context of first order logic programs (\cite{PROLOG, ASP, magicsets}. \gm{only in programs? I think people that do FOL theorem proving have also looked at it but I am quite blank on those}. The instantiation of the Markov Network from a logic theory is also a base concept in Statistical Relational AI (StarAI0 \cite{lucsbook}, where ground logic theories have been estensively used to instantiate a probabilistic graphical model (like a Markov field \cite{richardson2006markov} or a Bayesian net \cite{DeRaedtProbLog2007}). 

\begin{figure}[t!]
    \centering   \includegraphics[width=1\linewidth]{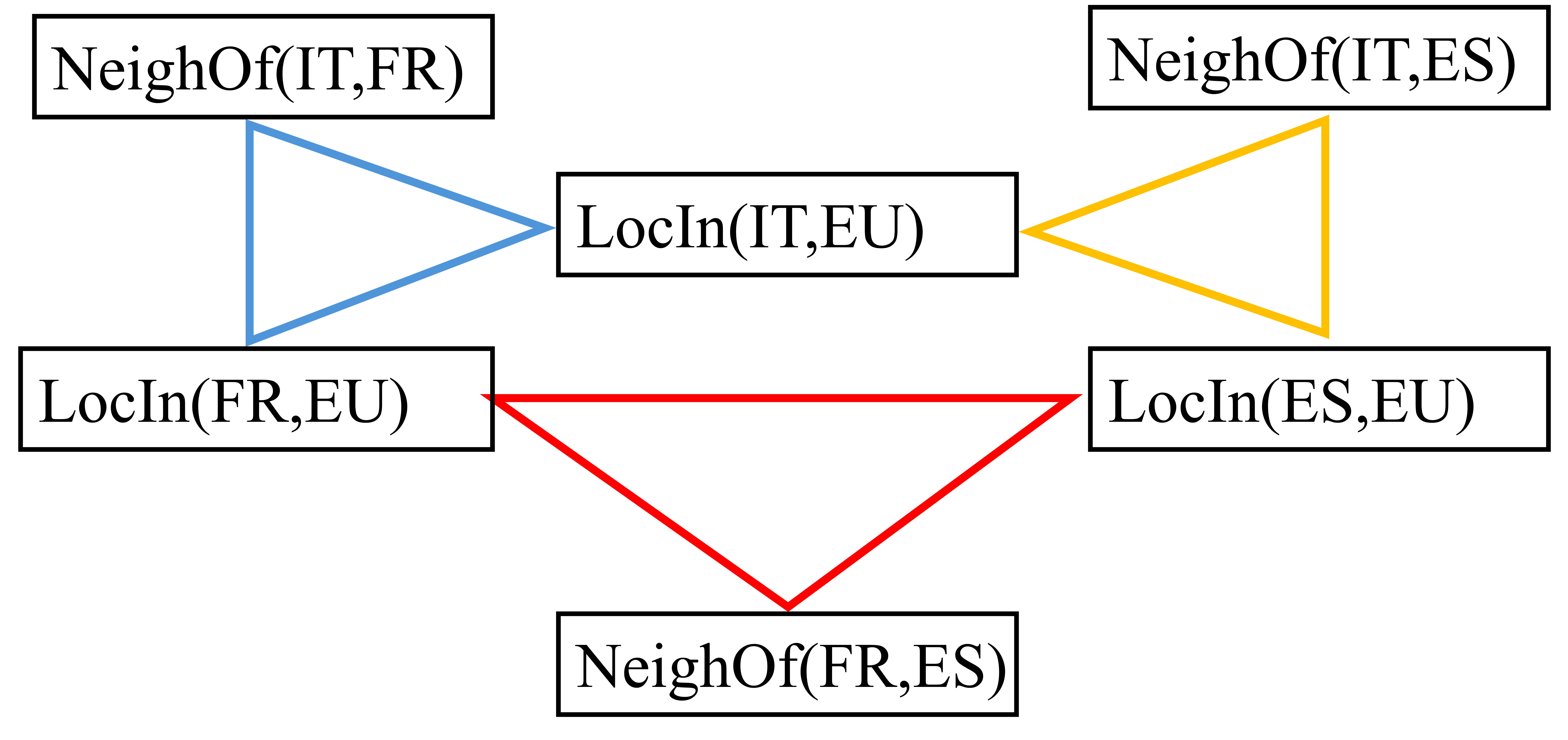}
    \caption{Portion of a GMN. We used the same color for atoms occurring in the same ground formula. The FOL theory includes the constants Italy (IT), France (FR), Spain (ES) and Europe (EU), the predicates LocatedIn (LocIn) and NeighbourOf (NeighOf), and the rule $\forall x\forall y\forall z~
    LocIn(x, z) \land NeighOf(x, y) \rightarrow LocIn(y, z)$.}
    \label{fig:ex_GMN}
\end{figure}

\paragraph{Knowledge Graph Embeddings.} A Knowledge Graph (KG) represents knowledge as a graph of entities (nodes) and relations (edges), with facts as (subject, relation, object) triples. Knowledge Graph Embeddings (KGEs)~\cite{wang2017knowledge} map KG entities and relations to low-dimensional vectors, encoding semantic meaning and statistical dependencies. KGEs enable effective reasoning even with incomplete/noisy graphs, for tasks like link prediction and query answering. Prominent examples include DistMult~\cite{yang2015embedding} and ComplEx~\cite{trouillon2016complex}.

% %%%%%%%%%%%%%%%%%%%%%%%%%%%%%%%%%%%%%%%%%%%
\section{Neural-Symbolic Models}
\label{sec:nesy}

% Neural networks excel at processing feature-based data, but present some well-known drawbacks, such as relying on an opaque decision process, requiring large amounts of training data, and struggling with generalization outside the scope of what was seen during training.
% On the other hand, symbolic AI methods based on logic reasoning are human interpretable, but they can be brittle and inflexible in handling real-world complexity.
% %Michi removed to make space:
% %They also require careful, hand-crafted knowledge representation, which is time-consuming.
% %
% NeSy models combine the strengths of neural nets and symbolic reasoning to overtake the limitations of the single paradigms.
% %
% In this paper, we focus on a large class of NeSy methods that stack a neural network processing the input constants and predicates in a FOL theory, while using the topology of the corresponding GMN to define the structure of the reasoner.
% Rodrigo: commented the part above for space
Neural networks, while effective at feature processing, are limited by opaque decision-making, large data requirements, and poor generalization. Symbolic AI, though interpretable, struggles with real-world complexity. Neuro-symbolic (NeSy) models combine their strengths. This paper focuses on NeSy methods using a neural network to process FOL constants/predicates, using the GMN topology to define the reasoner.

\begin{figure*}[th]
    \centering
\includegraphics[width=1\textwidth]{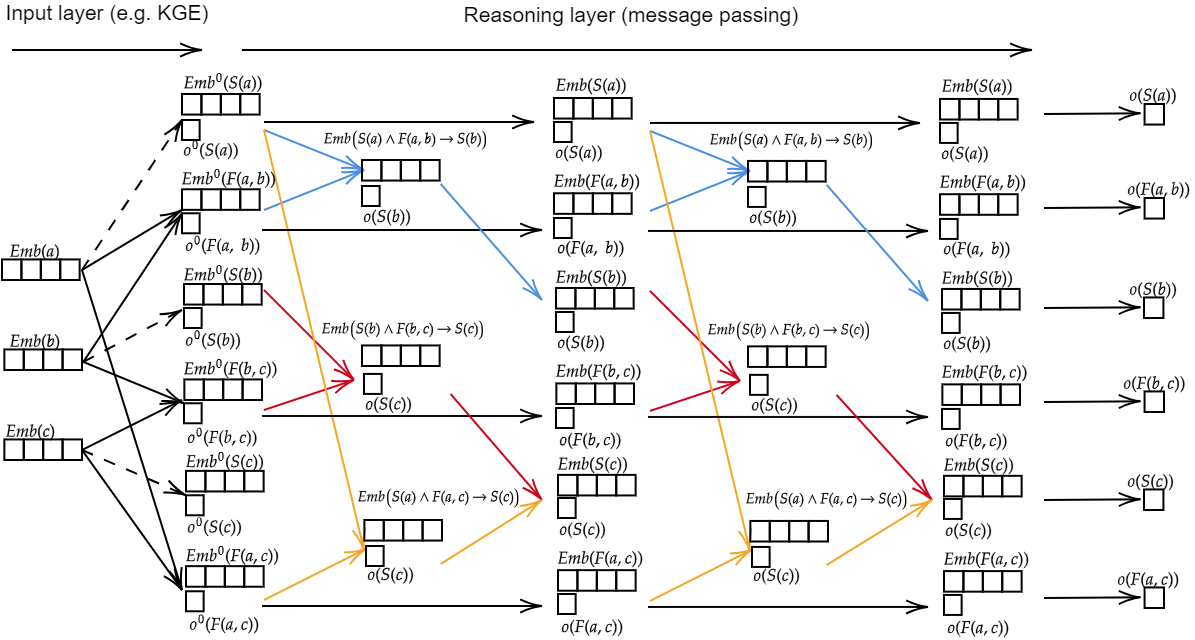}
\caption{In the considered NeSy models an atom processor takes as input the representation of the constants and compute an embedding and an initial prediction of the atoms. The representations and predictions are refined via message passing, thus performing the reasoning process.}
    \label{fig:model} 
\end{figure*}
The overall structure of these methods, inspired by \cite{barbiero2024relational}, is shown in \Cref{fig:model}, and it is composed of two high-level components:
\begin{enumerate}
\item \textbf{Atom Processing Layer} mapping each ground atom into an embedding and computing its initial prediction.
\item \textbf{Reasoning Layers} updating the atoms' embedding and score according to the structure imposed by the GMN associated to the logic theory, and computing a final updated version of the predictions for each ground atom. 
\end{enumerate}

\subsection{Atom Processing Layer}
The class of considered neural-symbolic methods assigns a numeric representation and an initial output prediction to all the ground atoms in the HB.
%In particular, each ground atom is assigned an embedding vector in some \textit{latent} space.
This can be formalized as: given a ground atom $\alpha = P(c_1,\ldots,c_n)$ and $Emb(c_i)$ the embedding of $c_i\in \mc{C}$,
%\fg{occhio alla notaz per le costanti, vedi sopra}
%\md{credo che ora siamo consistenti}
the atom processing layer outputs the embedding $Emb^0(\alpha)$ and the initial output prediction $o^0(\alpha)$ as:
% \fg{I'd use y, the o is too similar to 0} as:
\begin{eqnarray*}
    Emb^0(\alpha) \!\!\!&=&\!\!\!\! Enc_P(Emb(c_1), ..., Emb(c_n))\\
    o^0(\alpha) \!\!\!&=&\!\!\! Score( Emb^0(\alpha) )
\end{eqnarray*}
where $Enc_P$ is a generic encoding function associated with the predicate $P\in\mc{R}$, 
and $Score$ is a generic scoring function.
Many different representation learning approaches can be employed to implement and co-train the functions $Enc, Score$. While this paper focuses on KGEs and their scoring functions, the approach can be trivially applied to pattern recognition problems using neural networks.
% In this latter definition, an Atom Processing Layer is akin to a neural predicate \cite{manhaeve2018deepproblog}.

\subsection{Reasoning Layers}
The reasoning layers take the outputs of the atom processing layer and provide an updated representation (both as embedding and prediction) for each ground atom in the head of the rules. Finally, it outputs the predictions for all atoms. 
In the following, we indicate the set of rules of a FOL theory having an atom $h$ as head in the considered FOL grounding as $\mathcal{R}(h)$.
The reasoner can be seen as the unfolding network of a message passing process~\cite{gilmer2017neural} over the GMN, where messages flow from atoms appearing in the body of a ground formula to the atom in the head of the same ground formula. Below, we provide the message passing equations for some of the methods tested in this paper.

The messages flow from an atom node $b$ in the body of the $j$-th formula $r_j$ to the atom node $h$ in the head of the same ground formula as:
\begin{eqnarray*}
    Emb_j(h) &=& F_{j}( M_{ne(h,1) \rightarrow h}, ..., M_{ne(h,d_j) \rightarrow h}) \\
    o_j(h) &=& G_{j}( M_{ne(h,1) \rightarrow h}, ..., M_{ne(h,d_j) \rightarrow h})
\end{eqnarray*}
where $M^j_{b \rightarrow h}$ is the message defining the different neural-symbolic methods, $F_j,G_j$ are generic functions aggregating back the encoded representation and the output of a node, $ne(h,1), \ldots, ne(h,d_j)$ are the neighbours of node $h$ for the $j$-th rule, i.e. the set of body atoms associated to atom $h$ for the rule $r_j$, with $d_j = |ne(h)|$ the number of atoms occurring in the body of the $j$-th rule $r_j$.

The embedding of an atom node $h$ is updated by aggregating the representation for all the rules having atom $h$ as head:
$Emb(h) = \underset{r_j \in \mathcal{R}(h)}{\mathcal{A}^e} \ Emb_j(h),
$
where $\mathcal{A}^e$ is an aggregation operator like sum, mean or max.
Other methods instead aggregate directly over the outputs of a node:
$o(h) = \underset{r_j \in \mathcal{R}(h)}{\mathcal{A}^o} \ o_j(h), $
where $\mathcal{A}^o$ can be any aggregation operator, even if most methods set it as a disjunctive operator, like a max, as shown below.
% t-conorm as we will show in the following. \fg{ma alla fine questo non è necessario! solo se uno vuole proprio la semantica dell'esistenziale.... sennò pace}

\paragraph{Relational Reasoning Networks (R2N).} R2N \cite{marra2025relational} propagates the embeddings as representations of the atoms, using the following message passing schema:
\begin{eqnarray*}
    M_{b \rightarrow h} &=& Emb(b) \\
    Emb_j(h) &=& MLP^e_{j}( M_{ne(h,1) \rightarrow h}, ..., M_{ne(h,d_j) \rightarrow h}) \\
    Emb(h) &=& \displaystyle\sum_{r_j \in \mathcal{R}(h)} Emb_j(h) \\
    o(h) &=& MLP^o(Emb(h))
\end{eqnarray*}
where the embeddings are initialized using $Emb^0$, and $MLP^e_{j},MLP^o$ are multi-layer perceptrons (MLP) processing the embeddings and computing the outputs, respectively.

\paragraph{Logic Tensor Networks (LTN).} LTN~\cite{badreddine2022logic} and Semantic Based Regularization (SBR)~\cite{diligenti2017semantic}
aggregate the predictions of the body atoms of the $j$-th rule using a selected t-norm and enforce the consistency of the predictions with the logic rule. These methods were limited to unary predicates in the original papers, but here we propose a relational extension based on message passing:
\begin{eqnarray*}
    M_{b \rightarrow h} &=& o(b) \\
    o_j(h) &=& t{-}norm(M_{ne(h,1) \rightarrow h}, \ldots, M_{ne(h,d_j) \rightarrow h}) \nonumber \\
    o(h) &=& \displaystyle\max_{r_j \in \mathcal{R}(h)} o_j(h)
\end{eqnarray*}
where $o(\alpha)$ is initialized using $o^0(\alpha)$ and the original formulation was running a single propagation step.

\paragraph{Deep Concept Reasoners (DCR).} DCR \cite{barbiero2023interpretable} builds a formula from a set of candidate atoms before computing the output of a formula using a t-norm:
\begin{eqnarray*}
   M_{b \rightarrow h} &=& \Psi_j(Emb^0(b), \Phi_j(Emb^0(b), o(b))) \\
   o_j(h) &=& t{-}norm\left( M_{ne(h,1) \rightarrow h}, \ldots, M_{ne(h,d_j) \rightarrow h} \right) \\
   o(h) &=& \displaystyle\max_{r_j \in \mathcal{R}(h)} o_j(h)
\end{eqnarray*}
where $o_j(\alpha)$ is initialized using $o^0_j(\alpha)$, and the functions
$\Phi_j: \mathbb{R}^{n+1} \rightarrow [0, 1]$ and
$\Psi_j: \mathbb{R}^{n+1} \rightarrow [0, 1]$, assuming $Emb^0(\alpha)\in\mathbb{R}^n$, process the embedded representation of each atom in each rule $j$, to compute its polarity and relevance values, respectively, to get a learned Horn Clause. 
The process can possibly be iterated, even if the original formulation is defined for a single step of propagation.

\section{Grounding Methods}
% \section{Parameterized Backward Chaining Grounders}
\label{sec:grounders}

\begin{figure}[t]
  \centering
  \includegraphics[width=0.43\textwidth]{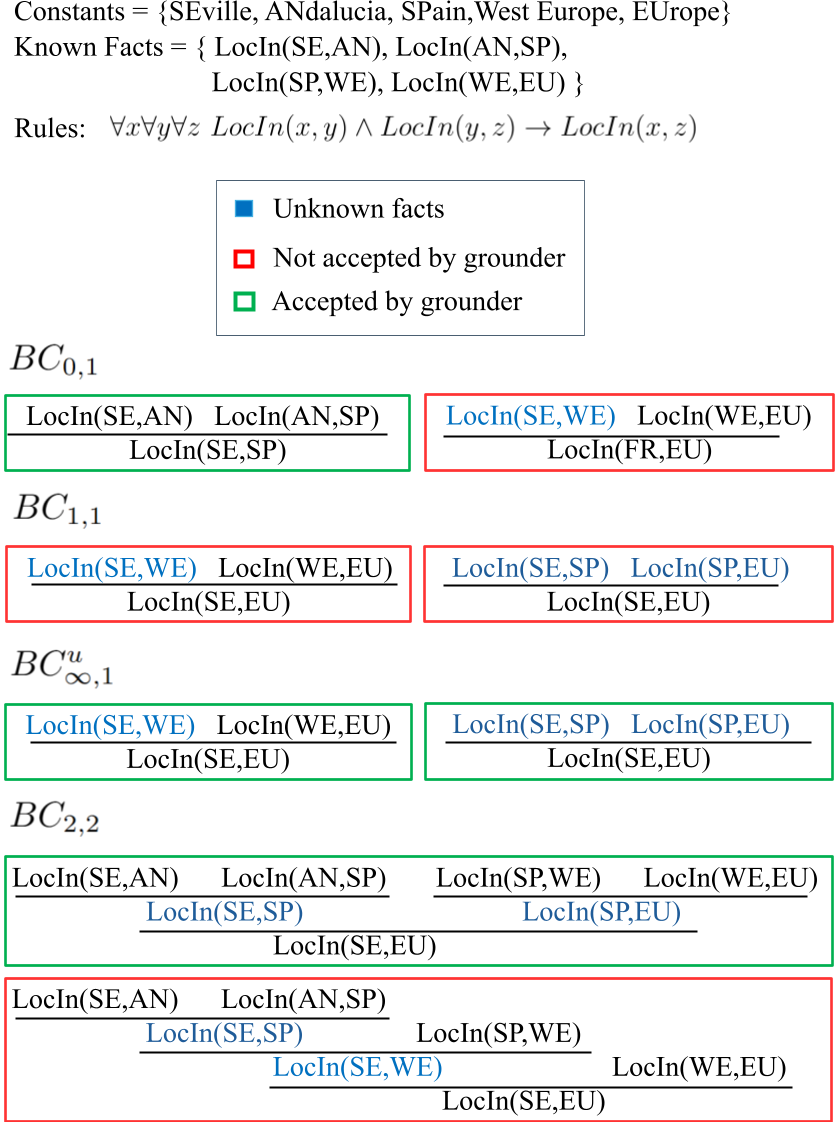}
  \caption{Illustration of some accepted/rejected proofs for different queries with respect to $w,d$ ($BC_{w,d}$) parameters.}
  \label{fig:backward_chaining}
\end{figure}

Logical reasoning is often used as a form of theorem proving~\cite{loveland2016automated} in computational logic. An example is query answering in Prolog, where the answer to a query is computed by searching for supporting facts given the set of rules of the program. Hence, proving can be cast to a search over the GMN, looking for subtrees (i.e., proofs) that connect the query to a set of supporting known facts.
% )\footnote{The vice versa holds in case of Forward Chaining.}. 
The known facts in a logic theory are typically much smaller than the HB.
% Therefore, the portion of the GMN relevant to the query is extremely small compared to the entire network. This makes proving efficient in logic programming.
%
Classic Backward Chaining in Prolog considers unknown facts as possibly provable, and recursively builds a sub-proof for them if they are the head of a logic rule. While this process is effective in finding a formal proof for a query given the available known facts, it can be very time-consuming. A commonly used strategy consists in fixing a maximum depth for the backward search, thus admitting to find only proofs within a maximum number of reasoning steps. %However, as a consequence, the number of provable queries can drastically decreases. 

In NeSy, the logic facts can be neither true nor false, as they are assigned a score (e.g. a probability of being true) by the corresponding input layer. Therefore, it would be limiting to only consider known true facts in the proving process. Many other facts can provide uncertain information (i.e. context) to predict a query and should be considered. Unfortunately, the size of the portion of the ground network relevant to the query may grow exponentially.
% , strongly reducing the standard pruning obtained in Boolean (binary) logic.
However, such a limitation also introduces a new opportunity. Since NeSy models can score any fact in the knowledge base, one does not necessarily need to search the supporting known facts, as the scored facts at shallow depths in a proof can provide enough context for the learning to happen at a statistical level. Nevertheless, the majority of existing NeSy models completely neglect such opportunity, requiring very deep proving processes and resulting in intractable methods for inference. 
%
% Io lo muoverei dentro
%All these limitations motivate our definition of a more flexible parameterized class of backward chaining grounders. The scalability of the grounders is controlled by a parameter for the maximum proof depth and a parameter controlling the number of allowed sub-proof branches. Moreover, once the proof has been built by the grounder, we allow two options: i) to discard the proof if it contains unknown atoms (as done by Backward Chaining), ii) to accept it by using unknown ground atom scores. For simplicity, if not stated differently, we will assume option i).

%%%%%%%%%%%%%%%%%%%%%%%%%%%%%%%%%%%%%%%%%%%%%%%%%

\subsection{Parameterized Backward Chaining Grounder}
The discussed limitations of classical Backward Chaining for NeSy motivate a novel definition of a flexible parameterized class of Backward Chaining grounders, which may take advantage of the generalization capabilities of NeSy methods over partially proved trees. 
A grounder in this parametrized class is defined as $BC_{w,d}$, where $w,d\in\mathbb{N}$ are referred to as the \textbf{width} and \textbf{depth} of the grounder, respectively.
The width $w$ determines the maximum number of atoms in the body of each ground rule that is allowed to not be in the training data and, in turn, to be proved as sub-goals. The depth $d$ controls the maximum depth that can be achieved by any proof for any query passing through its sub-goals.
If the number of unknown ground atoms is bigger than $w$ in any ground rule, the proof is immediately discarded even if more depth is allowed.
Moreover, once the proof has been built by the grounder, we allow two options: i) to discard the proof if it contains unknown atoms (as done by Backward Chaining), ii) to accept the proof by using the scores of the unknown ground atom. We denote these grounders allowing uncertain atoms as $BC^u_{w,d}$. For simplicity, if not stated differently, we will assume option i).
When using option i), $BC_{0,1} \equiv BC_{1,1}$, so we will use $BC_{0,1}$ in the rest of the paper.

To illustrate the effects of depths and widths, \Cref{fig:backward_chaining} compares the acceptance of diverse proofs across different parameter values.
\Cref{sec:exp} discusses the experimental results for different versions of $BC_{w,d}$.
Here below, we show how $BC_{w,d}$ subsumes as special cases  grounding techniques commonly used in the literature. 

\paragraph{Backward Chaining Grounder  $(w=\infty, d=\infty)$.} Backward Chaining is an inference strategy used in logic programming and automated theorem provers to efficiently prove unknown facts.
Given a query to prove, it searches for a rule, whose head atom matches the query. If some atom in the body of the rule is not in the training data, it becomes a new sub-goal that needs to be proven. The process is repeated recursively, until the query is proved with an associated proof tree.
Notice that, for $w=d=\infty$ we mean $w=B$, being $B$ the maximum amount of atoms in the body of all the rules.
% , or a maximum depth $d$ is reached, and the proof is discarded. If the query is proved, the set of ground rules that are selected in the process correspond to the groundings.
% While backward chaining can be used as an effective inference engine to prove facts in pure logic, note that this generally takes a different perspective for NeSy, as the given rules are generally only partially correct and statistical generalization can occur even for a branch which has not been fully proved at the leaves.
This is the grounding strategy employed by DeepProbLog \cite{manhaeve2018deepproblog}, but its application is unfeasible at the scale of the learning tasks considered in this paper.
% $BC_{\infty, \infty}$ cannot be used in a practical setting as 
% because of the undecidability of FOL, e.g. 
% the fact that it can be known a priori if a proof can be found in a theory. This means that the size of the partial proofs is not bound.

\paragraph{Backward Chaining Grounder with limited depth $(w=\infty, d=n)$.} In practical settings, the growth of the admissible proofs is limited, as often done in logic programming. This can be accomplished by setting $d=n$, where $n>0$ determines the maximum depth of the proofs.

%%%%%%%%%%%%%%%%%%%%%%%%%%%%%%%%%%%%KNOWN-BODY

\paragraph{Known Body Grounder  $(w=0, d=1)$.}
Several popular NeSy models, like e.g. ExpressGNN \cite{zhang2020efficient} and RNNLOGIC \cite{qu2020rnnlogic}, accept a grounding only if all body atoms are known facts, and recursion is limited to depth 1. Therefore, these approaches are equivalent to $BC_{0,1}$.

%%%%%%%%%%%%%%%%%%%%%%%%%%%%%%%%%%%%FULL

\paragraph{Full Grounder $(w=\infty, d=1)$.}  Full grounding of a FOL theory $\Gamma$ involves generating the $HU_\Gamma$, and it is used by models like LTN, SBR or MLNs. %This process aims to exhaustively explore all potential logical implications of the rules given the available data, thereby obtaining the full GMN. This results in a large set of grounded rules, where each grounded rule represents a specific logical implication. As for computational time, full grounding can be computationally expensive, especially for settings where there is a large set of constants, with complex rules in terms of arity.
%The time complexity of full grounding is $O(C^m)$ where $C=|\mc{C}|$ is the number of constants and $m=\max\{|vars(\alpha)|:\ \alpha\in\Gamma\}$ is the largest number of variables occurring in any rule in $\Gamma$. 
This grounder assumes option ii) for proofs, i.e. proofs are accepted even with unknown ground atoms, and thus it is denoted as $BC^u_{\infty,1}$. 
Roughly, the Full Grounder considers each atom in the $HB$ as a possible query. Hence, all possible ground atoms are represented in the graph, often including irrelevant information, but at the same time ensuring that all the required information needed to perform a correct inference could be recovered. 

\paragraph{Complexity Analysis.} The performances of $BC_{w,d}$ for NeSy applications are bound by two conflicting effects. On one hand, a grounder should be able to find the proofs needed to prove the queries. In this regard, increasing the width and the depth will result in extending the number of admissible proofs, thus increasing the number of provable queries.
\begin{proposition}
The set of facts that are provable via $BC_{w,d}$ is a subset of the provable facts via $BC_{w,d+1}$ and of the provable facts via $BC_{w+1,d}$.
\end{proposition}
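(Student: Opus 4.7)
The plan is to prove both inclusions by a direct monotonicity argument: a proof tree admissible for $BC_{w,d}$ remains admissible after either parameter is increased by one, and thus still witnesses provability of the same fact. First, I would make explicit what "provable via $BC_{w,d}$" means operationally: a fact $q$ is provable iff there exists a backward-chaining proof tree $T$ rooted at $q$ such that (a) the depth of $T$ is at most $d$, and (b) at every internal node of $T$ corresponding to the application of a ground rule $r$, the number of body atoms of $r$ that are not among the known training facts (hence turned into subgoals) is at most $w$. The key structural observation is that (a) is a global constraint on $T$ while (b) is a local constraint applied independently at each rule expansion, so the two parameters are decoupled.

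For the depth inclusion $BC_{w,d} \subseteq BC_{w,d+1}$, I would take an arbitrary $q$ provable under $BC_{w,d}$ and a witnessing proof tree $T$. Since $\text{depth}(T) \le d \le d+1$ and the per-node width constraint is exactly the same, $T$ is still an admissible proof under $BC_{w,d+1}$, establishing the inclusion. Symmetrically, for $BC_{w,d} \subseteq BC_{w+1,d}$, the same $T$ satisfies the width bound at every node with slack ($\le w \le w+1$) while its depth is untouched, so it is again admissible under $BC_{w+1,d}$.

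The only place where a subtlety could arise is in verifying that no hidden joint constraint couples $w$ and $d$ in the definition of $BC_{w,d}$; if, for instance, the grounder enforced a budget trading off depth against width, the above argument would fail. From the definition given in the paper, however, the two bounds are imposed independently, so the monotonicity argument reduces to two one-line inequalities, and the proposition follows immediately. I would close by noting that the same argument applies unchanged to the variant $BC^u_{w,d}$, since relaxing option (i) to option (ii) only affects which terminal leaves are accepted, not the monotonic dependence on $w$ and $d$.
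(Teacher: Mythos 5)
Your argument is exactly the paper's: the paper also proves the proposition by observing that any proof tree admissible under width $w$ (resp.\ depth $d$) remains admissible under $w+1$ (resp.\ $d+1$), i.e.\ by monotonicity of the set of buildable proofs in each parameter. Your version is simply a more explicit spelling-out of the same one-line argument, with the useful (but not strictly necessary) extra check that the two constraints are imposed independently.
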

\begin{proof}
    The proof trivially follows by observing that the set of buildable proofs in $BC_{w,d}$ is monotone w.r.t. $w$ and $d$. Indeed, any proof with maximum width $w$ (or depth $d$) is also a proof with maximum width $w+1$ (or depth $d+1$).   
    % that $BC_{w,d+1}$ only adds provable facts on top of $BC_{w,d}$ and that FOL reasoning is characterized by monotonicity of entailment, which ensures that a fact that deductively follows from a given set of fact then it also follows from any superset of those facts.
\end{proof}
%%%%%%%%%%%%%%%%%
However, a grounder should not instantiate more nodes than needed, as this may affect the generalization capabilities, as stated by the following theorem.
% : as theoretical results for GNNs show that the generalization capabilities of the learner are decreasing when the size of the processed graphs grows.
\begin{theorem}~\cite{scarselli2018vapnik}
\label{th:VC}
The Vapnik–Chervonenkis dimension (VCD) of the function $\phi$ computed by a GNN with sigmoidal outputs on the domain of graphs with up to $N$ nodes satisfies: 
$VCD(\phi) = O(p^2\cdot N^2)\ ,$
where $p$ is the number of parameters of the GNN.
\end{theorem}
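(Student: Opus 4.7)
The plan is to reduce the claim to known VC-dimension bounds for standard feedforward sigmoidal networks, using the fact that a GNN acting on a graph of bounded size can be unfolded into such a network whose size is controlled by $p$ and $N$. First, I would invoke the classical result of Karpinski and Macintyre, which states that a feedforward network with sigmoidal activations, $p'$ real parameters, and depth $L$ has VC dimension $O((p')^2 L^2)$ when used as a binary classifier. This is the workhorse inequality: everything else is about shoehorning the GNN computation into the hypothesis of this bound.

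Next, I would argue that the function $\phi$ computed by the GNN on any input graph with at most $N$ nodes can be realized by an unfolded feedforward circuit. At each message-passing iteration the same MLP (with parameters shared across nodes) is applied to each node and its neighbourhood; unrolling this over the nodes of the graph gives a layer of size $O(N)$ (times the width of the local MLP) per iteration, with all weights tied to the original $p$ parameters. Information only needs to propagate a number of steps bounded by the graph diameter, which is at most $N$, so the effective depth of the unfolded network is $L = O(N)$. Crucially, the unfolded network has the same $p$ free parameters as the GNN (parameter sharing does not increase the parameter count, only the connectivity pattern).

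Then I would plug these quantities into the Karpinski–Macintyre bound. Treating the hypothesis class as the union, over all graphs of size up to $N$, of the functions realized by the unfolded circuits with $p$ parameters and depth $O(N)$, the VC dimension is bounded by $O(p^2 \cdot N^2)$. A small additional step is needed to handle the fact that the architecture (i.e.\ the connectivity of the unfolded network) depends on the input graph: since the parameters are shared, one can take the worst-case architecture over graphs with $\le N$ nodes and still obtain the same asymptotic bound, because the bound depends only on $p$ and on the depth $L$, not on the per-input width.

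The main obstacle I expect is precisely this last point: the hypothesis class does not correspond to a single fixed feedforward architecture but to a family parameterized by the input graph, so one must be careful that the VC dimension is measured for the composite function mapping (graph, node features) to a binary label, and not just for a single unfolded network. Handling this cleanly requires either a uniform-architecture argument (pad all graphs to $N$ nodes and use a mask) or a growth-function argument that aggregates the shattering contributions across graph topologies; I would favour the padding approach, as it immediately places the problem inside the scope of the Karpinski–Macintyre theorem without altering the parameter count $p$ or the depth scaling in $N$.
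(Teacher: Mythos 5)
The paper does not actually prove this statement: it is imported verbatim, with citation, from Scarselli et al.\ (2018), so there is no in-paper proof to compare against. Your overall strategy --- unfold the message-passing computation on a graph with at most $N$ nodes into a weight-shared feedforward sigmoidal circuit and invoke a Pfaffian-type VC bound for such circuits --- is essentially the route taken by the cited source, so the reduction itself is the right idea, and your treatment of the two delicate points you identify (weight sharing keeps the free-parameter count at $p$; padding handles the input-dependent architecture) is sound in spirit.

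The genuine gap is in the quantitative step. The Karpinski--Macintyre bound for sigmoidal networks is $O(W^2 k^2)$ where $k$ is the number of \emph{computational units}, not $O(W^2 L^2)$ where $L$ is the depth; a bound depending only on the parameter count and the depth would be false for wide, shallow, weight-tied networks, which is exactly what one layer of an unrolled GNN is. With the correct statement, your unfolded circuit has $k = \Theta(N\cdot T\cdot u)$ unit applications ($T$ message-passing rounds, $u$ units in the shared local MLP), giving $O\bigl(p^2 N^2 T^2 u^2\bigr)$ rather than $O(p^2 N^2)$; since $u \le p$, this is $O(p^4 N^2)$ even for constant $T$, which is in fact the form the cited reference reports for the sigmoidal case. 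In addition, the claim that the effective depth is bounded by the graph diameter, hence by $N$, conflates an architectural hyperparameter (the number of propagation rounds) with a property of the input graph; and if you did take $T = \Theta(N)$ as you propose, the unit count would grow to $\Theta(N^2 u)$ and the bound would degrade further, to $O(p^2 N^4 u^2)$. So the arithmetic that lands you exactly on $O(p^2\cdot N^2)$ only goes through because of the misquoted workhorse inequality; to close the argument you must either track the unit count honestly (and accept the extra factors), or use a different base bound whose dependence really is on depth rather than width.
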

The Vapnik–Chervonenkis theory states that the generalization error grows as the square root of the $VCD$, therefore the error bound grows linearly with the graph size.
The number of nodes of the underlying reasoning graph built by $BC_{w,d}$ increases as $O(G^dw^{d-1}m)$, where $G$ is the number of groundings with at most $w$ atoms not in the known KG, and $m$ is the maximum number of body atoms within any rule.
For instance, if the rule with the longest body is a chain in the form of $p(x_1,x_2)\land \ldots \land p(x_{n-1},x_n)\rightarrow p(x_1,x_n)$ with $n$ representing the number of variables, the number of groundings grows as: $G \propto C^{min(w, n-2)}$ for a single reasoning step, where $C$ is the number of constants, and the overall ground network size grows as $O(C^{min(w,n-2)\cdot d}w^{d-1}m)$.
\Cref{th:VC} shows that the generalization error is bound to grow linearly with the size of the grounded network. Higher values for $w$ or $d$ lead to an increase in the size of the GMN, therefore it is fundamental for the success of the NeSy approach to select values for $w,d$ such that the logical context is preserved, while the generalization capabilities of the GNN are not hindered.

% %%%%%%%%%%%%%%%%%%%%%%%%%%%%%%%%%%%
\section{Experiments}
\label{sec:exp}
Several experiments have been performed on KG link prediction to compare the effectiveness of the grounders.\footnote{\url{https://github.com/rodrigo-castellano/Grounding-Methods}}

\paragraph{Datasets.} The Countries \cite{bouchard2015approximate}, Kinship \cite{kok2007statistical}, WN18RR \cite{dettmers2018convolutional} and FB15k-237 \cite{dou2021novel} datasets are used to capture diverse sizes and complexities. Countries is split into three tasks S\{1,2,3\} of increasing complexity, predicting country location based on regions and neighborhoods. Kinship represents familial relationships.
%WN18RR and FB15k-237 was created to ensure that the testing and evaluation datasets do not have inverse relation test leakage.
Dataset statistics are in \Cref{tab:symbolic_datasets}.

\paragraph{Rules.} The rules associated with the Countries dataset are $R_1=LocIn(x,w) \land  LocIn(w,z) \rightarrow LocIn(x,z)$, $R_2=NeighOf(x,y) \land LocIn(y,z) \rightarrow LocIn(x,z)$ and $R_3=NeighOf(x,y) \land NeighOf(y,k) \land LocIn(k,z) \rightarrow LocIn(x,z)$, where $LocIn$ refers to the predicate LocatedIn and $NeighOf$ refers to NeighbourOf. 
The task $S_1$ takes $R_1$, $S_2$ takes $R_1,R_2$, and $S3$ all of them. The Countries\_abl dataset is based only on $R_2$.
Rules for the remaining datasets were extracted using AMIE [Galárraga et al., 2013], selecting the top rules based on confidence. The number of selected rules per dataset is shown in \Cref{tab:symbolic_datasets}.

\begin{table}[t]
\centering
\label{tab:symbolic_datasets}
{\small
\setlength{\tabcolsep}{2pt}
\begin{tabular}{l|ccccc}
{\bf Dataset} & \#{\bf Entities} & \#{\bf Relations} & \#{\bf Facts} & \#{\bf Degree} & \#{\bf Rules}\\
\hline
Countries S1 & 272 & 3 & 1,110 & 4.28 & 1\\
Countries S2 & 272 & 4 & 1,062 & 4.35 & 2\\
Countries S3 & 272 & 4 & 978 & 4.35 & 3\\
% Nations      & 14 & 55 & 1992 & 142.3 & 100\\
Kinship      & 104 & 25 & 28,356 & 272.7 & 48 \\
WN18RR       & 40,943 & 18 & 93,003  & 2.3 & 17 \\
{FB15k-237} & {14505} & {237} & {310079} & {21.4} & 500 \\
% {PharmKG} & {7,247} & {28} & {485,787} & {67} &  50 \\
% {PharmKG\_s} & {108} & {27} & {11,369} & {67} &  50 \\
\end{tabular}
}
\caption{Basic statistics for the employed datasets.}
\end{table}

%\paragraph{Grounders. } We included the full grounder, Entity Relation, and Backward Chaining with depths 1,2, and 3. Some grounders could not be applied to all datasets due to the high computational cost.
% \paragraph{Hyperparameters. } The KGE has fixed embedding size of 100 for both constants and relations, which provides close to optimal results across all datasets. We used 1000 corruptions to obtain negative samples for WN18RR, from both head and tail. Adam was used as an optimizer with a learning rate of $10^{-2}$, and the chosen loss was binary cross-entropy. The training spanned 100 epochs. 

\paragraph{Hyperparameters.}  The KGE uses a fixed embedding size of 100, with overall 1000 head and tail corruptions for WN18RR. Adam optimizer ($10^{-2}$ learning rate) and binary cross-entropy loss were used over 100 epochs.

%For training and validation, full batch was used, whereas for test the batch size was 256. 
% \vspace{-0.3cm}
% \textbf{Models.}  ComplEx is used as the KGE baseline, this is the same KGE used as input layer in the NeSy methods to enable a reasoning ablation study. The KGE baseline is compared against NeSy models with very different characteristics. R2N is highly expressive but less interpretable in its application of logic, as it able to learn rules over groups of atoms. R2N tends to perform best in complex but loosely formalized reasoning tasks, like larger KGEs, while it can overfit when its high flexibility is not required. SBR is less expressive but highly interpretable as it only executes the available logic rules in its output layer, it performs best in simpler well formalized reasoning tasks. DCR lays as middle ground between the two methods.

\paragraph{Models.}  ComplEx is used as the KGE baseline and as the input layer for NeSy methods to enable a reasoning ablation study. The KGE baseline is compared with NeSy models of varying characteristics: R2N, which is highly expressive but less interpretable as it learns rules over groups of atoms, excels in complex, loosely formalized reasoning but risks overfitting when flexibility is unnecessary. SBR, less expressive but highly interpretable, performs best in simpler, well-formalized tasks as it strictly applies logic rules. DCR serves as a middle ground between these two approaches.

\paragraph{Metrics.}  For the evaluation, we use Mean Reciprocal Rank (MRR) and Hits@N metrics. %, chosen because of their wide acceptance in the research community. 
%MRR = $\frac{1}{|Q|} \sum_{q \in Q} \frac{1}{\text{rank}_q}$, where $Q$ is the set of queries and $\text{rank}_q$ is the rank of the first relevant result for query $q$.
%Hits@N is calculated by dividing the frequency of relevant items in the top-N recommendations by the total number of queries.
All metric evaluations have been averaged over five runs for the countries dataset. %In the results, the average and standard deviation are shown. %For the datasets WN18RR and  PharmKG, to mitigate computational expenses, a single run was conducted to obtain the results.

\begin{figure*}[t]
    \centering
    \includegraphics[width=0.88\textwidth]{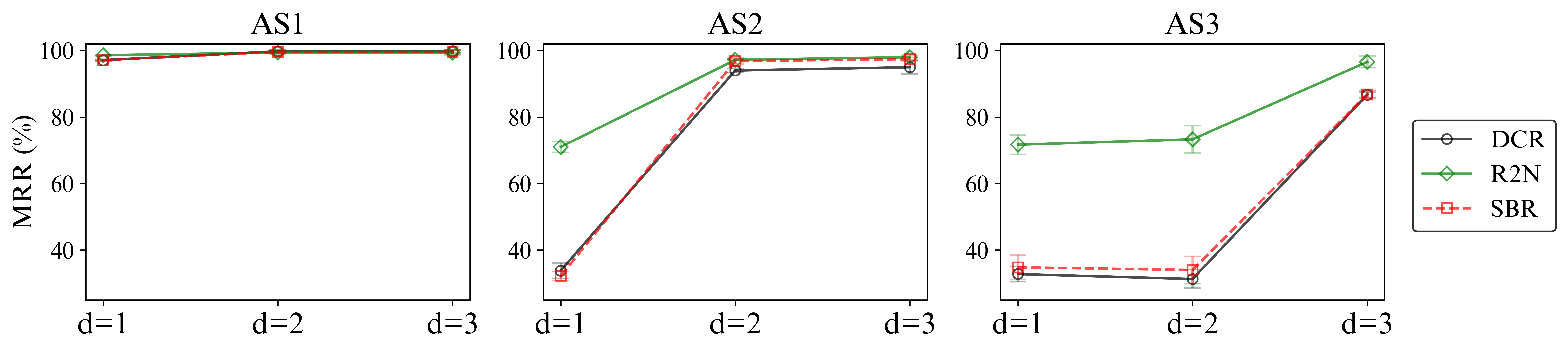}
    \caption{MRR results for different methods using $BC_{1,d}$ with different depths  for the Countries\_abl datasets $AS_1, AS_2, AS_3$.}
    \label{fig:ablation}
\end{figure*}

\begin{figure*}[th]
    \centering
\includegraphics[width=1\textwidth]{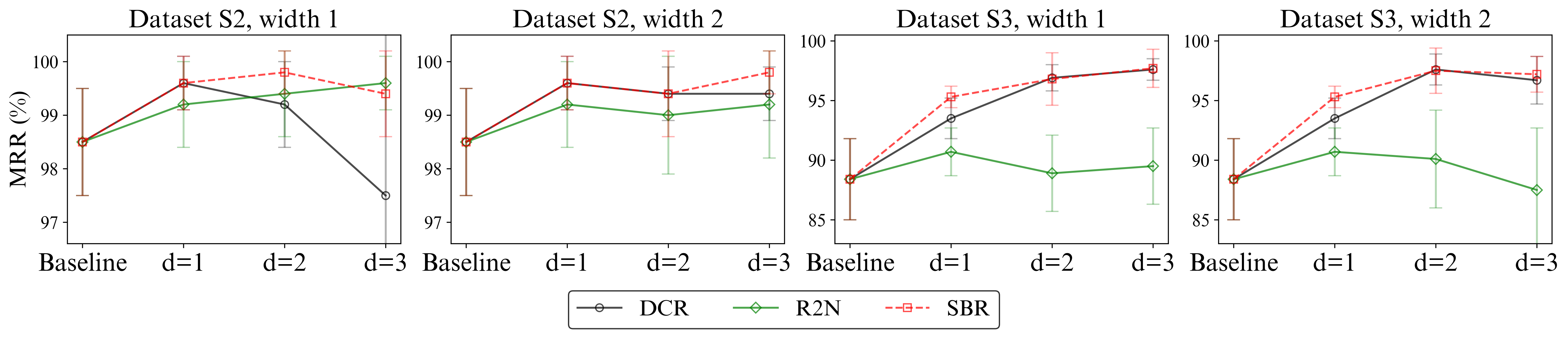}
    \caption{Results for the studied NeSy models and $BC_{w,d}$ grounders for Countries S2,S3 (S1 omitted as it is perfectly solved by all methods).}
    \label{fig:results_countries} 
\end{figure*}

\subsection{Experimental Results}
The first experiment serves as a proof of concept using the Countries\_abl dataset. Three splits ($AS_1, AS_2, AS_3$) are generated by ablating facts from the original dataset. The splits are designed with queries requiring one, two, and three reasoning steps, respectively, based on consecutive applications of the rule   $LocIn(x, w) \land LocIn(w, z) \rightarrow LocIn(x, z)$.
% The first experiment is a proof of concept on the Country\_abl dataset, where three splits $AS_1, AS_2, AS_3$ have been generated ablating facts from the original datasets to keep queries that can be proved with 1,2,3 reasoning steps and three consecutive applications of the rule $LocIn(x, w) \land LocIn(w, z) \rightarrow LocIn(x, z)$, respectively.
The $BC_{0,1}$ grounder is sufficient to solve the simple one-hop reasoning task defined by $AS_1$, whereas $BC_{1,2}$ or $BC_{1,3}$ are required to provide good MRR performances in $AS_2$, as shown in \Cref{fig:ablation}. Finally, $BC_{1,3}$ is the only grounder capable of solving $AS_3$ with high MRR with all methods. This is consistent with our expectations, as the grounding depth defines the number of hops preserved in the context of a query.

The results for the Country dataset are presented in \Cref{fig:results_countries}, where the baseline is placed as $d=0$ (this is consistent with the model definitions as the baseline is the model output if no message passing of the reasoner is performed on top of the input atom processing layer). 
The task S1 is quite simple so that all methods including the baseline are able to fully solve it with $MRR=1$, therefore it is omitted in the plots.
For S2, all NeSy methods outperform the baseline, with depth 1 ($BC_{w,1}$) sufficient for strong performance, and higher depths plateauing.
Regarding the more complex reasoning required by the S3 task, we observed that R2N tends to overfit for this dataset in the tested configuration. All other NeSy methods provide a large improvement over the baseline. Optimal results are achieved at either depth 2 or 3, which is indeed the number of reasoning hops required to solve the task.
\Cref{fig:country_full} compares the NeSy methods against the baseline when using the Full grounder for the S1,S2,S3 tasks. The performances of the Full grounder are, in general, not improving over the $BC_{w,d}$ grounders in \Cref{fig:results_countries}, with actually smaller average gains over the baseline. The Full grounder tends to ground many unnecessary atoms with respect to the focused $BC_{w,d}$ grounders, and this negatively affects model generalization (as stated by Theorem \ref{th:VC}).
The results for the WN18RR and Kinship are presented in \Cref{tab:results_other_datasets}, where we report only the grounders that could be applied in less than 10h of computation on our workstation (12 core i7 CPU, 64GB RAM, OS Linux).
For example, the application of the Full grounder is unfeasible for WN18RR or Kinship. 

\begin{figure}[t]
    \centering
    \includegraphics[width=0.40\textwidth]{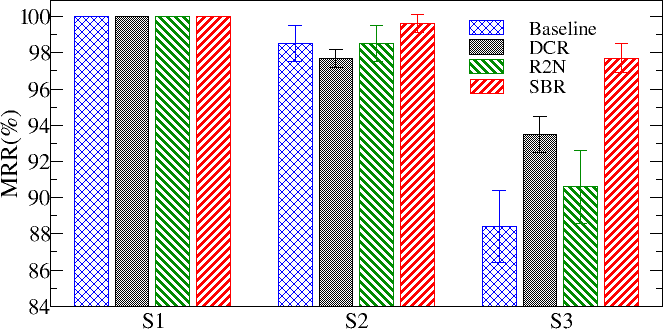}
    \caption{MRR results for the Country datasets and the Full grounder using different NeSy methods and the baseline.}
    \label{fig:country_full}
\end{figure}

\begin{table}[t]
\centering
\setlength{\tabcolsep}{2pt}
{\scriptsize
\begin{tabular}{cccccccc}
Model                         & Grounder             & MRR                            & Hits@1               & Hits@3               & Hits@10              & Train Time(s)        & Test Time(s)         \\ \hline
\multicolumn{8}{c}{\textbf{Kinship}} \\ \hline
DCR                           & $BC_{0,1}$           & 90.1                           & 84.1                 & \textbf{95.9}        & 97.0                 & 16479.6              & 7658.7               \\
                              & $BC_{1,2}$           & 90.1                           & 84.1                 & \textbf{95.9}        & 97.0                 & 16295.2              & 7517.4               \\
R2N                           & $BC_{0,1}$           & \textbf{94.0}                  & \textbf{92.1}        & 95.6                 & 96.5                 & 9573.3               & 6616.2               \\
                              & $BC_{1,2}$           & 91.8                           & 87.1                 & 96.4                 & 97.4                 & 48808.8              & 28249.2              \\
SBR                           & $BC_{0,1}$           & 86.9                           & 78.0                 & 95.6                 & 97.1                 & 9066.8               & 6208.7               \\
                              & $BC_{1,2}$           & 87.7                           & 79.1                 & 96.0                 & \textbf{97.3}        & 43354.7              & 27447.9              \\
ComplEx                       & -                    & 85.9                           & 79.2                 & 92.2                 & 94.5                 & 773.3                & 284.9                \\ \hline
\multicolumn{8}{c}{\textbf{WN18RR}} \\ \hline
DCR                           & $BC_{0,1}$           & 44.2                           & 42.2                 & 44.8                 & 47.6                 & 26133.3              & 2337.9               \\
                              & $BC_{1,2}$           & \textbf{45.6}                  & 42.9                 & \textbf{47.1}        & \textbf{50.2}        & 74626.7              & 6944.2               \\
R2N                           & $BC_{0,1}$           & 44.2                           & 42.3                 & 44.6                 & 47.3                 & 20613.7              & 2183.4               \\
                              & $BC_{1,2}$           & 44.1                           & 41.4                 & 45.4                 & 48.1                 & 72213.3              & 7353.4               \\
SBR                           & $BC_{0,1}$           & 44.0                           & 42.3                 & 44.2                 & 46.6                 & 21940.5              & 1909.8               \\
                              & $BC_{1,2}$           & 44.7                           & \textbf{42.5}        & 45.2                 & 48.2                 & 67851.6              & 6666.0               \\
ComplEx                       & -                    & 42.7                           & 40.8                 & 42.9                 & 45.9                 & 1079.2               & 138.7                \\ \hline
\end{tabular}
}
\caption{Results for different NeSy methods and grounders for the Kinship and WN18RR datasets compared against the baseline.} 
\label{tab:results_other_datasets}
\end{table}

\begin{table}[t]
\centering
\label{tab:fb15k237}
{\scriptsize
\begin{tabular}{c|ccccc}
& Grounder & MRR & Hits@1 & Hits@3 & Hits@10\\
\hline
% DistMult & - & 0.241 & 0.155 & 0.263 & 0.419\\
ComplEx  & - & 25.9 & 16.3 & 29.2 & 45.2\\
RNNLogic(emb.) & $BC_{0,1}$ & 34.4 & 25.2 & 38.0 & 53.0\\
%RLogic & 0.310 & 0.203 & - & 0.501 & -\\
%LPRules & 0.255 & 0.17 & - & 0.402 & -\\
%LatentLogic & 0.320 & 0.212 & 0.329 & 0.514 & -\\
R2N    & $BC_{0,1}$ & {\bf 34.7} & {\bf 25.4} & {\bf 38.2} & {\bf 53.1}\\
\end{tabular}
}
\caption{MRR, Hits@N metrics on the FB15k-237 dataset.}
\end{table}

%Entity Relation, $BC_{w,d} (d>1 \lor a>1)$ and
% Full grounder is impractical to apply on the largest datasets such as WN18RR.
% For this reason, in even larger datasets such as PharmKG, only the $BC_{0,1}$ with depth one could be feasibly applied. 
MRR gains over the baseline of approximately 9\%, and 3\% for Kinship, and WN18RR, respectively.
As shown by the inference times in Table \ref{tab:results_other_datasets}, increasing the depth of the grounder in the larger datasets strongly impacts the required inference times. In practice, any grounder with a depth $d>2$ would be very impractical to use for large KGs. Therefore we limited the experiments to $BC_{0,1}$ and $BC_{1,2}$. We observed generally improved results are obtained as the depth increases from $d=1$ to $d=2$ to retain enough information to improve the classification.
However, the statistical nature of NeSy tasks makes it possible to get competitive performances even when limited to $BC_{0,1}$.
For instance, in the Kinship dataset DCR works very well already at depth 1, even if R2N and SBR can provide slightly improved results at depth 2.
In the WN18RR dataset, $BC_{0,1}$ already provides significant gains over the baseline, even if $BC_{1,2}$ provides the best results both for DCR and SBR.
%Rodrigo, we may say tis but we need to provide some data to support the claim. A table with times if it enters?
%The parameterized $BC_{w,d}$ grounder allows for fine-tuning the trade-off between information extraction and computational efficiency. Increasing the depth $d$ generally improves performance, but at a significant computational cost. For instance, in the case of Countries S2 and the model R2N, the training time of full grounding is 60 times greater than that of $BC_{0,1}$, and the inference time is 16 times greater.
These differences are also found in larger datasets. For WN18RR, moving from $BC_{0,1}$ to $BC_{1,2}$ with the DCR model improved the MRR metric by 1.6 points, but increased training and inference times by a 3x factor.
This observation underscores the critical importance of efficient and effective grounding techniques for scaling NeSy methods to larger real-world KGs.
Results for the larger FB15k-237 dataset  are provided in \Cref{tab:fb15k237}, where the computational constraints limited experiments to $BC_{0,1}$, which still showed substantial improvements over the baseline for R2N. %Results for other models, such as DistMult and RNNLogic (emb.), are included for comparison.
We did not add SBR and DCR to the table as the available rules were not covering enough atoms to provide a fair comparison. % to R2N, which flexibly learns the rules among groups of atoms.

\section{Related Work}
\label{sec:rel_work}
%\md{So we have space for this? For the moment I drop it.}
%\subsection{Historical Recalls on Groundings in Logic}
%\fg{Do we want to add a small paragraph on the long-history tradition on groundings in logic? To make well clear that we are aware we are not inventing a new thing?
%On the other hand we could just mention it in the introduction, but it's a bit weaker maybe.}

\paragraph{Full Groundings Approaches.}
A large class of NeSy models was defined over a full grounding approach, this includes SBR \cite{diligenti2017semantic}, LTN~\cite{badreddine2022logic}, and Relational Neural Machines~\cite{marra2020relational}. The main problem with this grounding method is its lack of scalability, as shown in the experimental section.
%, when the datasets are large, it is not feasible to compute all the groundings of the presented rules. Only small-scale datasets can be used, limiting the applications of the methods. %This has led to many authors using  a query-centered approach, either by theorem proving or using different heuristics.

% In general, they share a common structure or a common way to do the full grounding, with smalls variations depending on the model. The conditional distribution for the variables \textbf{y} (interpretation or possible world) is determined through a probabilistic density function (pdf) decomposition, utilizing a collection of potentials $\Phi_c$. 
% \begin{equation}
% p\left(\mathbf{y} \mid \mathbf{f}^w, \lambda\right)=\frac{1}{Z} \exp \left(\sum_c \lambda_c \Phi_c\left(\mathbf{f}^w, \mathbf{y}\right)\right)
% \end{equation}

% We can associate a potential $\Phi_c$ to each formula in th KB. The potential $\Phi_c$ is the sum over all groundings of $\Phi_c$ in the world \textbf{y}, such that $\Phi_c(\mathbf{y})=\sum_{\mathbf{y}_{c, g} \in G_c(\mathbf{y})} \phi_c\left(\mathbf{y}_{c, g}\right)$, where $\mathbf{y}_{c, g}$ assumes a value equal to 1
% and 0 if the grounded formula holds true and false, and $G_c(\textbf{y})$ is the set of groundings of $c$ for all possible substitutions $g$.

\paragraph{Theorem proving approaches.}
Automatically answering a query (i.e. proving a theorem) given a set of definite clauses is at the core of logic programming~\cite{clocksin2012programming}. %,sldresolution}.
%Generally speaking, proving can be interpreted as search on the Markov Network, starting from known facts and moving towards the query atom, or viceversa.
When proving a fact, only a portion of the GMN is relevant, thus a full grounding is usually not necessary. Standard techniques to avoid the full grounding include \textit{Backward Chaining}~\cite{kapoor2016comparative} and \textit{Forward Chaining}~\cite{al2015comparison} or a mix of the two \cite{mumick1994implementation}. Since known facts are in general sparse over the HB, the relevant portion of the Markov network is much smaller than the full GMN obtained by grounding the rules over the entire domain. A large class of NeSy systems is based on theorem proving, this class includes systems exploiting a Prolog engine like DeepProbLog~\cite{manhaeve2018deepproblog}, DeepStochLog~\cite{winters2022deepstochlog}, an ASP engine~\cite{shakarian2023neurasp} or Datalog (e.g. LRNN \cite{sourek2018lifted}).
However, the exact proving approach does not allow the methods to scale beyond small inference problems.
Therefore, some methods try to further limit the graph size. For example, DPLA* \cite{manhaeve2021approximate} uses an A* approach to select the most promising portions of the GMN.

\paragraph{Heuristics-based approaches.} 
The most common approach used in the literature is to rely on some ad-hoc heuristics to make grounding tractable. However, it did not emerge in the literature a clear understanding of the trade-off between performance and computing time.
%There are, for example, several variations of the theorem proving approaches that attempts to reduce the complexity of the grounding or to include information that may be relevant. The relation entity graph could be an example of a heuristic-based approach.
%\fg{RNNLogic: where G is a background knowledge graph characterized by a set of (h, r, t)-triplets. To get scorew(e|rule), we sum over every grounding path found in the graph G. TO REFINE IN SENTENCES}
\textit{RNNLogic} \cite{qu2020rnnlogic} generates logic rules and then scores the query triplets using the grounding paths in the training knowledge. This makes the model account for the grounded rules where all body atoms are true in the training facts, which is the special case where Backward Chaining is limited to depth 1.
\textit{Discriminative Gaifman Models}~\cite{niepert2016discriminative} considers local neighbourhoods of a KG as features to predict queries. The local neighbourhoods can be seen as a generalization of a Horn clause, representing a chain of facts in the knowledge that imply the query. Similarly to RNN Logic, this considers only the grounded rules where all body atoms are true in the training facts.
\textit{ExpressGNN} \cite{zhang2020efficient} operates directly on the knowledge graph rather than the extensive GMN grounding graph. This is similar to the approach of \textit{UniKER} \cite{cheng2021uniker}, which focuses on instantiating rules only with the training triples in the knowledge graph but iteratively enriching the training data with new facts derived using forward chaining.
\textit{KALE}~\cite{guo2016jointly} includes groundings where at least one atom is true in the training data, and uses a full grounding approach for the other variables. %\rc{btw, for the complexity, they introduce the averaged number of observed triples per relation. "But please note that KALE needs to ground universally quantified rules before learning, which further requires O($n_un_t/n_r$) in time complexity. Here, nu is the number of universally quantified rules, and nt/nr is the averaged number of observed triples per relation".}
%\md{This analysis is weird it should be O($n_u \frac{n_t,n_r)^{n-3}$) where n is the number of variables and they have depth 1}
\textit{RUGE} \cite{guo2018knowledge} considers a grounding when the triples in the rule's body are observed in the training data, and the head triple is not.

% This is more a future work than a limitation
%In exploring the limitations of current grounding methods in NeSy AI, it's important to acknowledge potential avenues for improvement. For instance, one promising direction involves the development of a method that generatively creates a subset of the GMN. Such an approach holds the promise of enhancing current methods by providing a more tailored and refined grounding process.

\section{Conclusions and Future Work}
\label{sec:conc}
This paper studies the effect of different grounding schemas on the predictive accuracy of NeSy methods.
The main contributions of the work are in the definition of new parametrized grounding approaches, tailored to NeSy methods, and in the extensive experimental comparison of these grounders, offering insights into the trade-offs between grounding depth, model performance, and computational efficiency. The results on link prediction on KGs show that increasing the depth of the parameterized grounder enhances performance in complex reasoning tasks, but the statistical nature of NeSy allows preserving a good prediction accuracy even using shallow and efficient grounders.
This contribution is not only methodological but also practical, highlighting the critical role of the grounding schema in bridging the gap between symbolic reasoning and neural learning.

This study is limited to static grounders without any feedback from the NeSy method, which is the most used approach. In future work, we plan to explore dynamic grounders, which iteratively exploit the predictions of the learner to dynamically incorporate new atoms into the set of known facts.
This could lead to the definition of parametric grounders which can be trained end-to-end with the learner. %Finally, a relevant future work is to work on grounding techniques that makes the training and inference more efficient, making the grounding in larger datasets feasible.

\section*{Ethical Statement}

There are no ethical issues.

\section*{Acknowledgments}

  This work has been partially supported by the project PNRR M4C2 “FAIR - Future Artificial Intelligence Research” - Spoke 1 “Human-centered AI” , code PE0000013, CUP E53C22001610006.  This work was also supported by the EU Framework Program for Research and Innovation Horizon under the Grant Agreement No 101073307 (MSCA-DN LeMuR). This work has been partially supported by the project ‘‘CONSTR: a COllectionless-based
Neuro-Symbolic Theory for learning and Reasoning’’, PARTENARIATO
ESTESO ‘‘Future Artificial Intelligence Research - FAIR’’, SPOKE 1
‘‘Human-Centered AI’’ Università di Pisa, ‘‘NextGenerationEU’’, CUP
I53C22001380006.

% \subsubsection{\discintname} The authors have no competing interests to declare that are relevant to the content of this article.

% \appendix

% \section*{Ethical Statement}

% There are no ethical issues.

% \section*{Acknowledgments}

%% The file named.bst is a bibliography style file for BibTeX 0.99c
\bibliographystyle{named}
\bibliography{ijcai25}

\end{document}